\DeclareMathOperator*{\argmax}{argmax}
\begin{document}

\setlength{\floatsep}{-0.2pt}
\setlength{\itemsep}{0pt}
\setlength{\parsep}{-0.1pt}
\setlength{\textfloatsep}{0cm}

\title{Quantifying Aspect Bias in Ordinal Ratings using a Bayesian Approach}

\author{Lahari Poddar \hspace{1cm} Wynne Hsu \hspace{1cm} Mong Li Lee \\
        School of Computing  \\ National University of Singapore \\ \{lahari, whsu, leeml\}@comp.nus.edu.sg}

\maketitle

\newtheorem{theorem}{Theorem}[section]
\newtheorem{lemma}[theorem]{Lemma}

\begin{abstract}
User opinions expressed in the form of ratings can influence an individual's view of an item.
However, the true quality of an item is often obfuscated by user biases, and it is not obvious from the observed ratings
the  importance different users place on different aspects of an item.
We propose a probabilistic modeling of the observed aspect ratings to infer (i) each user's aspect bias and (ii) latent intrinsic quality of an item. We model multi-aspect ratings as ordered discrete data and encode the dependency between different aspects by using a latent Gaussian structure. We handle the Gaussian-Categorical non-conjugacy using a stick-breaking formulation coupled with P\'{o}lya-Gamma auxiliary variable augmentation for a simple, fully Bayesian inference. 
On two real world datasets, we demonstrate
the predictive ability of our model and its effectiveness in learning explainable user biases to provide insights towards a more reliable product quality estimation.

\end{abstract}

\section{Introduction}
With easy availability of information on the web, user ratings have become increasingly important in molding people's perception of an item.
However, an item typically has many aspects and not all aspects are equally important to all users. 
To some user, the \textit{cleanliness} of a hotel is most important and he/she tends to rate this aspect stringently, but is lenient when rating \textit{food} or \textit{amenities}. Other users may have a different set of preferences and their aspect ratings for the same item could be vastly different. Hence, it is difficult to interpret conflicting ratings without knowing the underlying user biases. For an item with only few ratings this is aggravated, since even its average ratings are highly susceptible to the users' biases. 

\begin{figure}[thbp]
	\centering
\includegraphics[width=0.75\linewidth]{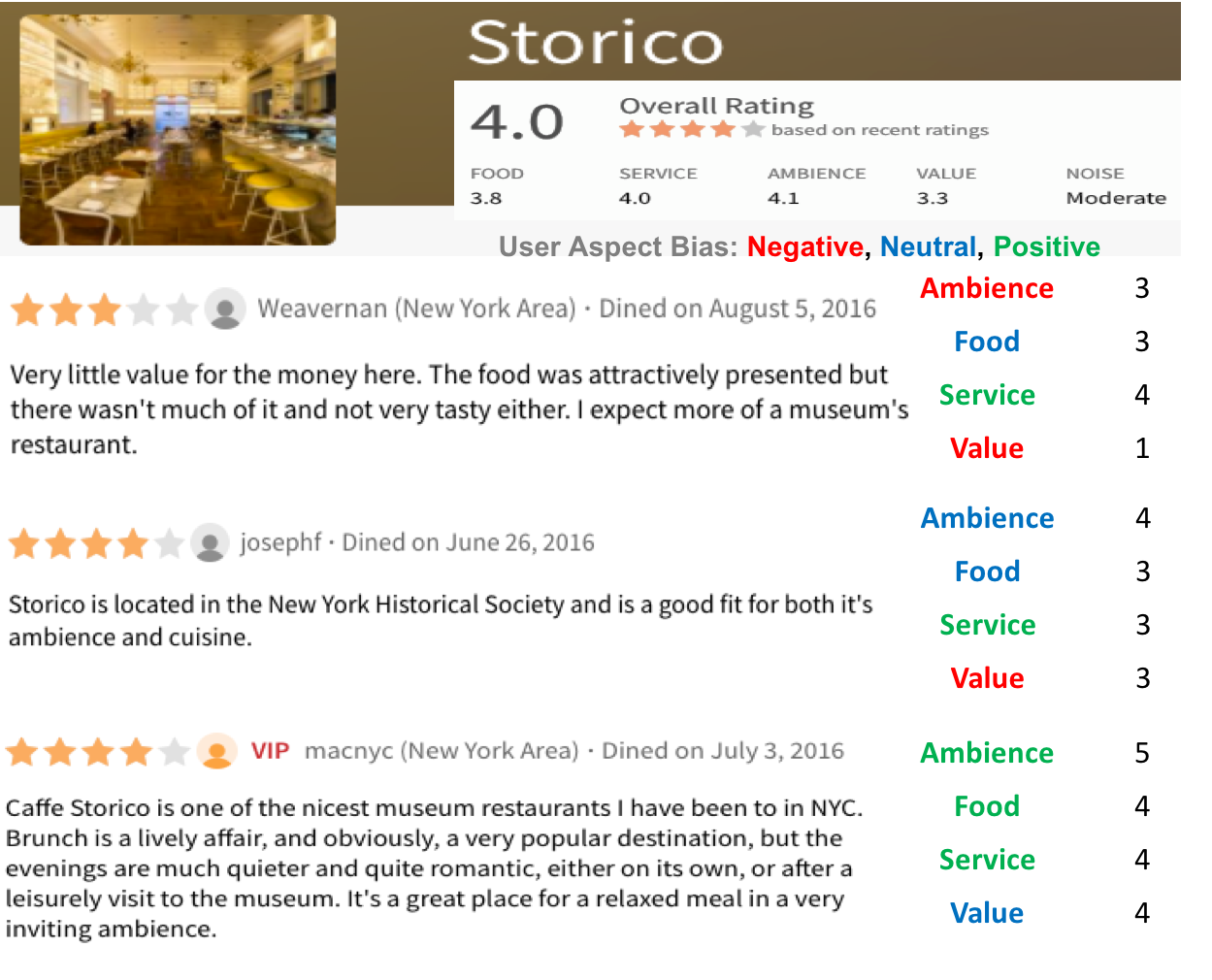}\label{sample}
\vspace{-0.1in}	
	\caption{\small A sample restaurant's ratings with color coded user aspect bias. This is a real output produced by the proposed model.  }
	\vspace*{0.1in}
\label{demo}
\end{figure}

To enable proper interpretation of ratings, 
we propose a unified probabilistic model for quantifying the underlying user biases for different aspects that lead to the  observed ratings. 
We model the correlation between aspects by allowing a covariance structure among them. This is realistic since a user's bias, and in turn his rating, of one aspect may be correlated with another aspect.

We detect the underlying aspect preferences of individual users that are consistent across their ratings on different items.
We can learn the aspect bias of users even with few ratings, by introducing latent user groups, based on the similarity of users' rating behavior on various aspects. 
 For example, one user group might generally give low ratings for \textit{ambience} while another user group  gives high ratings for \textit{food}.

 Figure \ref{demo} shows an example application of the model where the learned user aspect bias is displayed beside the ratings.
People with a negative bias tend to be more critical about the aspect and generally underrate the aspect than other users, whereas people with a positive bias for an aspect tend to overrate it. Knowing the aspect biases of individuals, users can better interpret their ratings.
Furthermore this is beneficial for service providers to focus on improving the aspects of an item that  consumers \emph{truly} care about. 

While existing works assume ratings to be continuous, in reality most observed ratings in e-commerce websites are ordinal in nature. 
Our model  incorporates the ordinal nature of observed ratings through proper statistical formulation.
However, modeling the ordinal nature of observed ratings as well the correlation between aspects introduce non-conjugacy into our model, making Bayesian inference very challenging.  
 
To eliminate the non-conjugacy of Gaussian prior-Categorical likelihood, we utilize stick-breaking formulation with  P\'{o}lya-Gamma auxiliary variable augmentation.
The construction proposed in the paper is efficient and generic. It will help developing inference mechanisms for various applications that need to model ordinal data  in terms of continuous latent variables with a correlation structure.

Experiments on two real world datasets from TripAdvisor and OpenTable
demonstrate that the proposed model provides new insights in users' rating patterns, and  outperforms state-of-the-art methods for aspect rating prediction. 

To the best of our knowledge, this is the first work to model ordinal aspect ratings parameterized by latent multivariate continuous responses, with a simple, scalable and fully Bayesian inference.

\section{Ordinal Aspect Bias Model}

In this section, we describe the design of our Ordinal Aspect Bias model and present a Bayesian approach for inference.

Suppose we have $J$ users and $I$ items. 
Let  $R$ be the set of observed ratings where  $\mathbf{r_{ij}}$ is an $A$ dimensional vector denoting the rating of user $j$ for item $i$ on each of its aspects. Each $\mathbf{r_{ij}}$ is a discrete value between $1$ and $K$  corresponding to a $K$-level scale (\textit{poor} to \textit{excellent}).  We assume that $\mathbf{r_{ij}}$ arises from a latent multivariate continuous response $\mathbf{v_{ij}}$ which is dependent on (i) the intrinsic quality of the item on the aspect and (ii) the bias of the user for the aspect.

The intrinsic quality of an item $\mathbf{z_i}$ is an $A$ dimensional vector, drawn from a multivariate normal distribution, with mean $\boldsymbol{\mu}$ and covariance matrix $\boldsymbol{\Sigma}$. We use multivariate normal distribution to account for the correlation among the subsets of aspects of an item. For example, it is highly unlikely for a hotel to have excellent \textit{room} quality but very poor \textit{cleanliness}, but it is possible to have a  good \textit{location} and average \textit{food} choices. Such correlations among subset of aspects are captured by the covariance matrix. The parameters $(\boldsymbol{\mu, \Sigma})$ are given a conjugate normal-inverse Wishart (NIW) prior. 

The preference of a user for an aspect is captured by a bias vector $\mathbf{m_g}$ of dimension $A$. If a user places great importance on a particular aspect (e.g. \textit{cleanliness}), this will be reflected in his ratings across all hotels.
In other words, his rating on the \textit{cleanliness} aspect will tend to be lower than the majority's rating for \textit{cleanliness} on the same hotel. We cluster users with similar preferences into different user groups and associate a bias vector $\mathbf{m_g}$ with each group. The membership of a user $j$ in a user group is denoted as $s_j$  where $s_j$ is drawn from a categorical distribution $\theta$ with a Dirichlet prior parameter $\alpha$. 

Given the intrinsic quality $\mathbf{z_i}$ and  bias $\mathbf{m_g}$,
the latent response  $\mathbf{v_{ij}}$ is drawn from a multivariate Gaussian distribution with $\mathbf{z_{i}} + \mathbf{m_{s_j}}$ as mean and a hyper-parameter $\mathbf{B}$ as covariance. This is intuitive as a user's response depends on the item's intrinsic quality for an aspect as well as his own bias. 

With the latent response $\mathbf{v_{ij}}$, we sample the observed rating vector $\mathbf{r_{ij}}$. 
Note that since the observed ratings are ordered and discrete, they should  be drawn from a categorical distribution. 
However, the latent response $\mathbf{v_{ij}}$ is given a multivariate Gaussian prior. 
In order to have a fully Bayesian inference, we need to \textit{transform} this categorical distribution to a Gaussian form to exploit conjugacy. This is the central technical challenge for our proposed model. 

We develop a stick-breaking mechanism with logit function to map the categorical likelihood to a binomial form.
Thereafter, leveraging the recently developed P\'{o}lya-Gamma auxiliary variable augmentation scheme \cite{polson2013bayesian}, the binomial likelihood is transformed to Gaussian, thus establishing conjugacy and enabling us to achieve an effective posterior inference. 
The generative process of the model is as follows:

\begin{enumerate}
	\small
	\itemsep0em 
	\item Draw a multinomial group distribution $\theta$ from Dirichlet $(\alpha)$.
	\item For each group $g \in {1, \cdots , G}$ draw a bias offset $\mathbf{m_g}$ from $N_A(0,\Lambda)$
	\item For each user $j \in {1, \cdots, J} $, sample a group $s_j$ from Cat ($\theta$)
	\item For each item $i \in {1, \cdots, I} $, sample an intrinsic rating $\bf{z_i}$ from $N_A(\boldsymbol{\mu, \Sigma})$
	\item For each rating $\mathbf{r_{ij}} \in R$
	\begin{enumerate}
		\item draw latent continuous rating $\bf{v_{ij}}$ from $N_A(\mathbf{z_i}+\mathbf{m_{s_j}} , \mathbf{B})$
		\item draw observed ordinal rating $\bf{r_{ij}}$ from
		Cat $(SB(\bf{v_{ij}},\bf{c}))$
	\end{enumerate}
\end{enumerate}

\noindent where $SB(\bf{v_{ij},c})$ refers to the stick-breaking parametrization of the continuous response $\bf{v_{ij}}$  using cut-points $\bf{c}$. Figure \ref{fig:IB} shows the proposed graphical model using plate notation. 
\begin{figure}[tbp]
	\centering
	\includegraphics[width=0.65\linewidth]{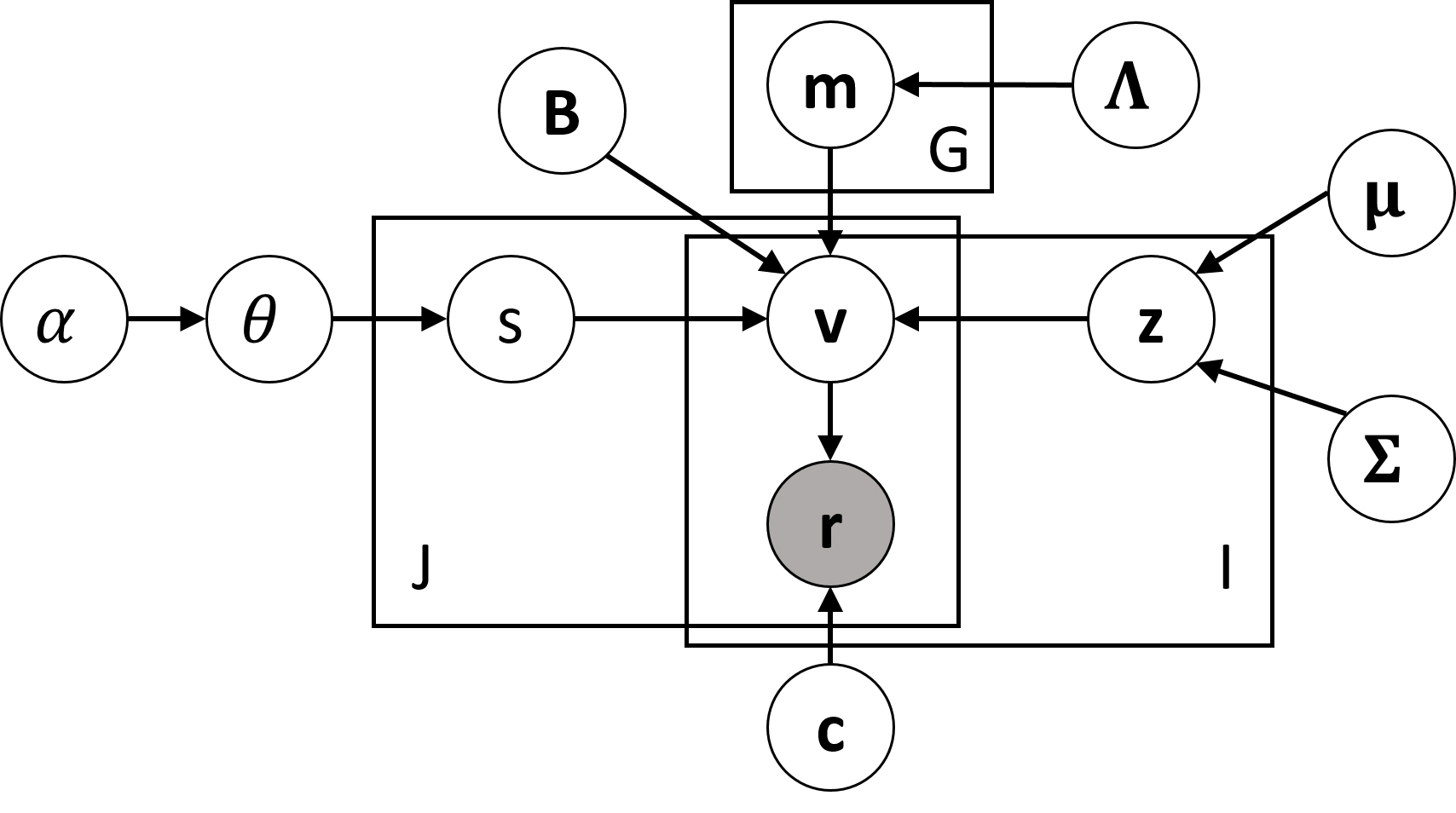}
    \vspace{-0.1in}	
	\caption{\small Ordinal Aspect Bias Model}\label{fig:IB}
	\vspace*{0.1in}
\end{figure}

\subsection{Stick-Breaking Likelihood}

We first discuss how to map the categorical likelihood of $\mathbf{v_{ij}}$, denoted as $Lik(\mathbf{v_{ij}})$, to a binomial form.

Let $r_{ija}$ denote the observed ordinal rating of item $i$, by user $j$ on aspect $a$,  and is drawn from a categorical distribution over $K$ categories. Since the categories are ordered, we utilize a stick-breaking parameterization for the probabilities $P(r_{ija} = k) \text{ where } k \in \{1, \cdots ,K\}$.  Suppose we have a unit length stick where the continuum of points on this stick represents the probability of an event occurring. If we break this stick at some random point $p$, then we have a probability mass function over two outcomes (with probabilities $p$ and $1-p$). By breaking the stick multiple times, we  obtain a probability mass function over multiple categories. 

Let $\mathbf{c}=\{c_1, \cdots, c_{K-1}\}$  be a cut-point vector where  $c_1 < c_2 < \cdots < c_{K-1}$ represent the boundaries between the ordered categories. 
The probability of each ordinal rating $r_{ija}$ being assigned the categorical value $k$, is parametrized using a function of the covariate $\eta_{ija}^k$ = $c_{k} - v_{ija}$.
Then the probability of observing the vector of ratings $\mathbf{r_{ij}}$ is a product of probabilities of observing each of the aspect ratings $r_{ija}$ given the values of $\boldsymbol{\eta_{ija}}$. 
Hence the likelihood of $\mathbf{v_{ij}}$ is:
\vspace*{-0.1in}
\begin{equation}
\small
Lik(\mathbf{v_{ij}}) = P(\mathbf{r_{ij}} | \mathbf{v_{ij} , c}) = P(\mathbf{r_{ij}} | \boldsymbol{\eta_{ij}}) = \prod_{a=1}^{A} P(r_{ija}|\boldsymbol{\eta_{ija}})
\label{Lik}
\end{equation}

To squash $\boldsymbol{\eta_{ija}}$ within [0,1] we use a sigmoid function on it denoted by $f(x) = \frac{e^x}{1+e^x}$.
Sigmoid function  enables us to use
 P\'{o}lya-Gamma augmentation scheme  \cite{polson2013bayesian} to handle the non-conjugacy subsequently. 
 For identifiability, we set $f(\eta_{ija}^K) = 1$. 
The stick-breaking likelihood can be written as: 
\begin{equation}
\small
P(r_{ija} = k) = \prod_{k' < k} (1 - f(\eta_{ija}^{k'})) f(\eta_{ija}^k)
\label{p_k}
\end{equation}
 \vspace*{-0.1in}

\noindent By encoding  $r_{ija}$ with a 1-of-$K$ vector $\mathbf{x_{ija}}$ where
\begin{equation}
x_{ija}^k = 
\begin{cases}
1 & \text{if  } r_{ija} = k \\
0 & \text{otherwise}
\end{cases}
\end{equation}

\noindent we now rewrite the likelihood of $\mathbf{v_{ij}}$  in binomial terms:
\vspace*{-0.1in}
\begin{equation}
\small
P(r_{ija}|\boldsymbol{\eta_{ija}}) =
P(\mathbf{x_{ija}} | \boldsymbol{\eta_{ija}}) = \prod_{k=1}^{K-1} Binom(x_{ija}^k | N_{ija}^k , f(\eta_{ija}^k))
\label{sb_binom}
\end{equation}
\vspace*{-0.2in}
where $$N_{ija}^k = 1 - \sum_{k'<k} x_{ija}^{k'}$$

\subsection{P\'{o}lya-Gamma Variable Augmentation}

Next, we explain how to transform the binomial likelihood to a Gaussian form via P\'{o}lya-Gamma ($PG$) auxiliary variable augmentation scheme. The integral identity at the heart of the $PG$ augmentation is:
\begin{equation}
\small
\frac{(e^{\psi})^a}{(1+e^{\psi})^b} = 2^{-b} e^{\kappa \psi} \int_{0}^{\infty} e^{- \omega \psi^2/2} p(\omega) d\omega
\label{PGintegral}
\end{equation}
where $\kappa = a -b/2$, $b>0$ and $\omega \sim PG(b,0)$. 

By expanding the binomial likelihood in Eqn. \ref{sb_binom}, we get
{\small
\begin{align}
\nonumber P(\mathbf{x_{ija}} | \boldsymbol{\eta_{ija}}) &= \prod_{k=1}^{K-1} \binom{N_{ija}^k}{x_{ija}^k} (f(\eta_{ija}^k))^{x_{ija}^k} (1 - f(\eta_{ija}^k))^{N_{ija}^k - x_{ija}^k} \\
&= \prod_{k=1}^{K-1} \binom{N_{ija}^k}{x_{ija}^k} \frac{(e^{\eta_{ija}^k})^{x_{ija}^k}}{(1 + e^{\eta_{ija}^k})^{N_{ija}^k}}\\\nonumber
\end{align}
}%

Using the integral identity of PG augmentation, we can now rewrite the categorical likelihood of $\mathbf{v_{ij}}$ as:
\vspace*{-0.1in}
{\small
\begin{align}
\label{pg_lik}
Lik(\mathbf{v_{ij}}) &= \prod_{a=1}^{A} P(\mathbf{x_{ija}}|\boldsymbol{\eta_{ija}})\\
&\propto \prod_{a=1}^{A} \prod_{k=1}^{K-1}  e^{\kappa_{ija}^k \eta_{ija}^k} \int_{0}^{\infty} e^{- \omega_{ija}^k ({\eta_{ija}^k})^2/2} p(\omega_{ija}^k) d\omega_{ija}^k \nonumber
\end{align}
}%
\noindent where $\kappa_{ija}^k = x_{ija}^k - N_{ija}^k/2$, $\psi_{ija}^k = \eta_{ija}^k$ and $p(\omega_{ija}^k)$ is $PG(N_{ija}^k/2,0)$ independent of $\psi_{ija}^k$.

By property of PG distribution \cite{polson2013bayesian},  we can draw the 
 auxiliary variable 
$\omega_{ija}^k$  from  $PG(N_{ija}^k, \eta_{ija}^k)$.
Conditioning on $\boldsymbol{\omega_{ij}}$, $Lik(\mathbf{v_{ij}})$ can be  transformed to a Gaussian form:

\vspace*{-0.05in}
{\small
\begin{align}
\label{gauss_lik}
Lik(&\mathbf{v_{ij}}) \propto \prod_{k=1}^{K-1} \prod_{a=1}^{A} e^{\kappa_{ija}^k \eta_{ija}^k} e^{- \omega_{ija}^k ({\eta_{ija}^k})^2/2} \\\nonumber
& \propto \prod_{k=1}^{K-1} \prod_{a=1}^{A} exp\{\kappa_{ija}^k (c_{k} - v_{ija}) - \omega_{ija}^k (c_{k} - v_{ija})^2 /2 \} \\ \nonumber
& \propto \prod_{k=1}^{K-1} \prod_{a=1}^{A} exp\{ - \omega_{ija}^k ((c_{k} - v_{ija}) - \frac{\kappa_{ija}^k}{\omega_{ija}^k})^2\} \\\nonumber
& 
\begin{aligned}
\propto \prod_{k=1}^{K-1} exp\{ -\frac{1}{2} (\frac{\boldsymbol{\kappa_{ij}^k}}{\boldsymbol{\omega_{ij}^k}} - (\mathbf{c_{k}} -  \mathbf{v_{ij})})^T   \boldsymbol{\Omega_{ij}^k} (\frac{\boldsymbol{\kappa_{ij}^k}}{\boldsymbol{\omega_{ij}^k}} -  (\mathbf{c_{k}} - \mathbf{v_{ij})})
\end{aligned}
\end{align}
}%
where $\boldsymbol{\kappa_{ij}^k}, \boldsymbol{\omega_{ij}^k}$ are vectors of dimension $A$,  $\boldsymbol{\Omega_{ij}^k}$ is a diagonal matrix of $(\omega_{ij1}^k , \omega_{ij2}^k, \cdots , \omega_{ijA}^k)$. 

Here,  we assume the values in the $A$-dimensional cut-point vector $\bf{c_k}$ are all equal to $c_k$. In practice, if we need different cut-points for different aspects, $\bf{c_k}$ can be set accordingly. 

\subsection{Bayesian Inference} 
Finally, we  describe the sampling of user groups $\mathbf{s}$, bias offset of user groups $\mathbf{m}$, intrinsic ratings $\mathbf{z}$,  cut-points $\mathbf{c}$ and latent continuous ratings $\mathbf{v}$ using fully Bayesian MCMC inference.
We  factor the joint probability of these  variables as:

{\small
\vspace*{-0.05in}
\begin{equation*}
P(\mathbf{r,v,m,z,s,c}) = P(\mathbf{r|v,c})P(\mathbf{v|m,z,s})P(\mathbf{c})P(\mathbf{z})P(\mathbf{s})P(\mathbf{m})
\vspace*{0.05in}
\end{equation*}
}%

\noindent\textbf{Sampling Bias Offset of User Groups.}
For each user group $g$, we sample its bias offset $\bf{m_g}$ from the Gaussian posterior:

{\small
\begin{equation*}
\small
P(\mathbf{m_g} | \boldsymbol{\Lambda},\mathbf{v},\mathbf{z}) \propto P(\mathbf{m_g} | \boldsymbol{\Lambda}) \prod_{j \in J[g]} \prod_{i \in I[j]} P(\mathbf{v_{ij}} | \mathbf{m_g} , \mathbf{z_i}, \mathbf{B})
\end{equation*}
}%

where $J[g]$ is the set of users belonging to group $g$ and $I[j]$ is the subset of items rated by user $j$.

Since the prior is a multivariate Gaussian $N_A(0,\Lambda)$ and the observations $\mathbf{v_{ij}}$ are also drawn from a multivariate Gaussian $N_A(\mathbf{z_i}+\mathbf{m_g}, \mathbf{B})$, the posterior of $\mathbf{m_g}$ is given by a Gaussian $N_A(\mathbf{\hat{m_g}}, \boldsymbol{\hat{\Lambda_g}})$ with

{\small
\vspace*{-0.05in}
\begin{eqnarray*}
\mathbf{\hat{m_g}} &=& \boldsymbol{\hat{\Lambda_g}} (\mathbf{B}^{-1} \sum_{j \in J[g]} \sum_{i \in I[j]} (\mathbf{v_{ij}} - \mathbf{z_i}))\\ 
\boldsymbol{\hat{\Lambda_g}} &=& (n_g \mathbf{B}^{-1} + \boldsymbol{\Lambda})^{-1}
\end{eqnarray*}
}%

where $n_g$ is the total number of ratings observed for users belonging to group $g$.\\

\noindent\textbf{Sampling User Groups.}
We  integrate out the group distribution $\theta$ by exploiting Dirichlet-Multinomial conjugacy, and sample the group of each user $j$ as:
{\small
\begin{equation*}
\small
P(s_j| \alpha, \mathbf{m}, \mathbf{v}) \propto  P (s_j | \alpha) \prod_{i \in I[j]} P(\mathbf{v_{ij}} | \mathbf{m_{s_j}} , \mathbf{z_i} , \mathbf{B})
\end{equation*}
}%
where $I[j]$ are the subset of items rated by user $j$, 
the prior
$P (s_j | \alpha)$ is given by the Dirichlet distribution. The likelihood is the multinomial distribution given by the probability of observing all the ratings of the  user $j$ given bias $m_{s_j}$. \\

\noindent\textbf{Sampling Intrinsic Ratings.}
Similar to the bias offsets of user groups, we sample intrinsic rating $\bf{z_i}$  of each item $i$ from a Gaussian distribution $N_A(\boldsymbol{\hat{\mu_i}} , \boldsymbol{\hat{\Sigma_i}})$ where
{\small
\begin{align*}
\boldsymbol{\hat{\mu_i}} &= \boldsymbol{\hat{\Sigma_i}} (\mathbf{B}^{-1} \sum_{j \in J[i]} (\mathbf{v_{ij}} - \mathbf{m_{s_j}}) 
+ \boldsymbol{\Sigma}^{-1} \boldsymbol{\mu}) \\
\boldsymbol{\hat{\Sigma_i}} &= (n_i \mathbf{B}^{-1} + \boldsymbol{\Sigma})^{-1}  \nonumber
\end{align*}
}
where $n_i$ is the total number of ratings observed for item $i$ and $J[i]$ is the subset of users who have rated item $i$.
The prior parameters  $\boldsymbol{\mu,\Sigma}$ of the intrinsic ratings are given a conjugate Normal-Inverse Wishart (NIW) prior and sampled.\\

\noindent\textbf{Sampling Latent Continuous Ratings.}
The latent continuous ratings, $\mathbf{v_{ij}}$ have a Gaussian prior $N_A((\mathbf{z_i + m_{s_j}}), \mathbf{B})$ and a categorical likelihood $P(r_{ija}|$ $\mathbf{v_{ij}}$, $\mathbf{c})$. We have transformed the categorical likelihood to the conditional Gaussian form (recall Eqn. \ref{gauss_lik}). The posterior can be formulated as:
{\small
\vspace*{0.05in}
\begin{align}
\nonumber P(\mathbf{v_{ij}}) &\propto P(\mathbf{v_{ij}} | \mathbf{m_{s_j}}, \mathbf{z_i}, \mathbf{B}) * Lik(\mathbf{v_{ij}} | \boldsymbol{\omega},\mathbf{r_{ij}}, \mathbf{c})\\
&
\begin{aligned}
& \propto exp\{- \frac{1}{2} (\mathbf{v_{ij}} - (\mathbf{z_i} + \mathbf{m_{s_j}}))^T \mathbf{B}^{-1} (\mathbf{v_{ij}} - (\mathbf{z_i} + \mathbf{m_{s_j}}))\} \\ \nonumber
& \hspace{-0.1in} * \prod_{k=1}^{K-1} exp\{ -\frac{1}{2} (\frac{\boldsymbol{\kappa_{ij}^k}}{\boldsymbol{\omega_{ij}^k}} - (\mathbf{c_{k}} - \mathbf{v_{ij})})^T \boldsymbol{\Omega_{ij}^k}  (\frac{\boldsymbol{\kappa_{ij}^k}}{\boldsymbol{\omega_{ij}^k}} - (\mathbf{c_{k}} - \mathbf{v_{ij})}) \}
\end{aligned}
\end{align}
}%

Since both the prior and likelihood are now Gaussian, we have the following Gibbs sampler:
{\small
\begin{eqnarray*}
\bf{v_{ij}} &\sim& N_A(\boldsymbol{\mu_{ij\omega}} , \boldsymbol{\Sigma_{ij\omega}}) \\
\bf{\omega_{ija}} &\sim& PG(\bf{N_{ija}} ,v_{ija} - \bf{c}) 
\end{eqnarray*}
\vspace{-0.1in}
\noindent \text{where}
\vspace{-0.1in}
\begin{eqnarray*}
\boldsymbol{\mu_{ij\omega}} &=& \bf{B^{-1}}(\mathbf{z_i} + \mathbf{m_{s_j}}) + \sum_{k=1}^{K-1} \boldsymbol{\Omega_{ij}^k} (\bf{c_k} - \frac{\boldsymbol{\kappa_{ij}^k}}{\boldsymbol{\omega_{ij}^k}}) \\ 
\vspace{-0.2in}
\boldsymbol{\Sigma_{ij\omega}} &=& \bf{B^{-1}} + \sum_{k=1}^{K-1} \boldsymbol{\Omega_{ij}^k} 
\end{eqnarray*}
}%
\noindent\textbf{Sampling Cut-Points.}
Sigmoid function in the stick-breaking formulation allows us to sample cut-points while ensuring their relative order without additional constraints. Figure \ref{fig:cutPointSim} shows probability distributions for simulated cut-points. 

\begin{figure}[htbp]
\centering
	\includegraphics[width=0.65\linewidth]{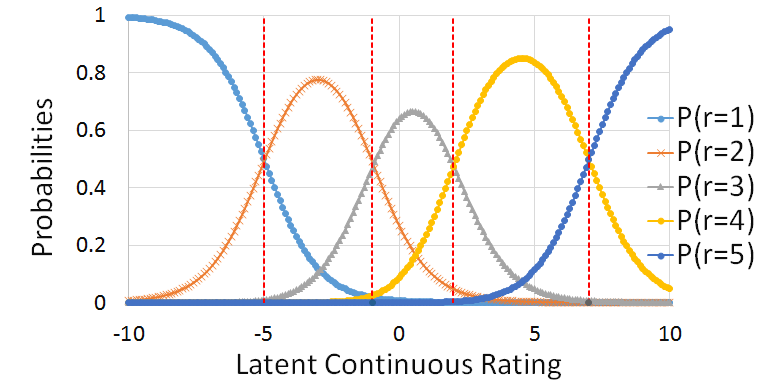}
	\vspace{-0.05in}
	\caption{\small Category probabilities for cut-points (-5,-1,2,7)}\label{fig:cutPointSim}
	\vspace*{-0.05in}
\end{figure}

The following lemma gives  the relationship between cut-points, latent continuous ratings, and the observed ratings.

\begin{lemma}
	
If  $v_{ija}$ $>$  $c_{k} - ln~(1-e^{-({c_{k+1}-c_{k})}}) $,  
then 
 $P(r_{ija}=k+1)$ $>$ $P(r_{ija}=k)$.
\label{cutpointlemma} 
\end{lemma}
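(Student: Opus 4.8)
The plan is to prove the lemma by direct algebraic reduction of the stick-breaking likelihood in Eqn.~\ref{p_k}, turning the probability inequality into an elementary condition on $v_{ija}$ through the sigmoid $f$. First I would write both probabilities explicitly. From Eqn.~\ref{p_k}, $P(r_{ija}=k) = \big(\prod_{k'<k}(1-f(\eta_{ija}^{k'}))\big)\, f(\eta_{ija}^k)$, while the product running up to $k+1$ contains exactly one extra factor, giving $P(r_{ija}=k+1) = \big(\prod_{k'<k}(1-f(\eta_{ija}^{k'}))\big)\,(1-f(\eta_{ija}^k))\, f(\eta_{ija}^{k+1})$. Since every $f(\cdot)\in(0,1)$, the common leading product is strictly positive and cancels, so the claim $P(r_{ija}=k+1) > P(r_{ija}=k)$ is equivalent to the much simpler inequality $(1-f(\eta_{ija}^k))\, f(\eta_{ija}^{k+1}) > f(\eta_{ija}^k)$.

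Next I would substitute $f(x)=\frac{e^x}{1+e^x}$ and $1-f(x)=\frac{1}{1+e^x}$. The factor $(1+e^{\eta_{ija}^k})^{-1}$ then appears on both sides and cancels, reducing the inequality to $\frac{e^{\eta_{ija}^{k+1}}}{1+e^{\eta_{ija}^{k+1}}} > e^{\eta_{ija}^k}$. Writing $\delta = c_{k+1}-c_k$ so that $\eta_{ija}^{k+1}=\eta_{ija}^k+\delta$, and setting $u=e^{\eta_{ija}^k}>0$, this becomes $\frac{u e^{\delta}}{1+u e^{\delta}} > u$; dividing by $u$ and clearing the positive denominator yields $e^{\delta}(1-u) > 1$, i.e. $u < 1 - e^{-\delta}$. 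Taking logarithms then gives $\eta_{ija}^k = c_k - v_{ija} < \ln(1-e^{-\delta})$, which rearranges to exactly $v_{ija} > c_k - \ln(1-e^{-(c_{k+1}-c_k)})$, the stated bound.

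The computation is routine, so there is no deep obstacle; the points requiring care are (i) correctly accounting for the single extra stick-breaking factor that distinguishes $P(r_{ija}=k+1)$ from $P(r_{ija}=k)$, and (ii) justifying that every division and the final logarithm preserve the inequality direction. The latter hinges on the cut-point ordering $c_1 < \cdots < c_{K-1}$ assumed in the model, which guarantees $\delta = c_{k+1}-c_k > 0$, hence $1-e^{-\delta}\in(0,1)$, so the logarithm is well defined (and negative) and strictly increasing over the relevant range. I would state this positivity-plus-monotonicity observation explicitly to make the final logarithmic step airtight.
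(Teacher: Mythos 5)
Your proof is correct and takes essentially the same route as the paper's: both cancel the common stick-breaking prefix, reduce the claim to the ratio $(1-f(\eta_{ija}^k))f(\eta_{ija}^{k+1})/f(\eta_{ija}^k) > 1$, and solve the resulting elementary inequality (the paper parametrizes by $\delta_k = v_{ija}-c_k$ where you use $u = e^{c_k - v_{ija}}$, a cosmetic difference). Your explicit note that $c_{k+1}-c_k>0$ makes the logarithm well defined is a welcome touch of rigor the paper glosses over.
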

\begin{proof}[Proof.]
\small
Let $\delta_{k} \geq -~ln~(1-e^{-(c_{k+1}-c_{k})})$.
By replacing $v_{ija}$ with ($c_{k}+\delta_{k}$)	in Eqn. 2, we have
	{\scriptsize
	\begin{align}
	\phantom{P(r_{ija}=(k))}
	&
	 \nonumber \mathllap{P(r_{ija}=k)} = \prod_{q<k} (1- f(c_{q} - c_{k} - \delta_{k}))(f(c_{k} - c_{k} - \delta_{k})) \\\nonumber
	&= \prod_{q<k} (1- f(c_{q} - c_{k} - \delta_{k}))(f(-\delta_{k}))\\ \nonumber	
	&	
		\mathllap{P(r_{ija}=k+1)}
		 = \prod_{q<k} (1- f(c_{q} - c_{k} - \delta_{k}))( 1 - f(-\delta_{k})) ( f(c_{k+1} - c_{k} -\delta_{k}))
	\end{align}
	}%
Taking the ratio, we have 
{\scriptsize 
\begin{align*}
\frac{P(r_{ija}=k+1)}{P(r_{ija}=k)} &= \frac{ ( 1 - f(-\delta_{k})) ( f(c_{k+1} - c_{k} -\delta_{k}))}{f(-\delta_{k})}\\
&= (\frac{e^{\delta_{k}}}{1+ e^{\delta_{k}}} * \frac{1}{1+e^{c_{k}+\delta_{k} - c_{k+1}}}) / (\frac{1}{1+e^{\delta_{k}}})\\
&=  \frac{e^{\delta_{k}}} { 1+e^{c_{k}-c_{k+1}+\delta_{k}}}
\end{align*}
 }%
 
Since $\delta_{k} \geq -ln(1- e^{-(c_{k+1}-c_{k})})$, we see that $\frac{e^{\delta_{k}}} { 1+e^{c_{k}-c_{k+1}+\delta_{k}}} > 1$. Hence,  $P(r_{ija}=k+1) > P(r_{ija}=k)$. 
\end{proof}

We have shown that $P(r_{ija}=k+1) > P(r_{ija}=k)$ when $v_{ija} \geq (c_{k} -ln(1- e^{-(c_{k+1}-c_{k})})$. Similarly, $P(r_{ija}=k) > P(r_{ija}=k-1)$ when $v_{ija} \geq (c_{k-1} -ln(1- e^{-(c_{k}-c_{k-1})}))$. 
This implies that, when $v_{ija}$ is within the range  $(c_{k-1}-ln(1- e^{-(c_{k}-c_{k-1})}) , c_k -ln(1- e^{-(c_{k+1}-c_{k})})]$, then $P(r_{ija}=k)$  has the maximum probability over all other categories. In other words,  for $v_{ija}$ in the stated range, we have $\argmax_{k'} P(r_{ija} | v_{ija},k') = k$.

Hence, given the sampled values of $v_{ija}$ we can constrain the possible set of values for the cut-points. We sample cut-point $c_k$ from a uniform distribution within the  range:
{\scriptsize
\begin{align*}
c_k & \sim U[max\{v_{ija}|  \argmax_{k'} P(r_{ija} | v_{ija},k') = k\} -ln(1- e^{-(c_{k}-c_{k-1})}), \\ 
& \qquad min\{v_{ija} | \argmax_{k'} P(r_{ija} | v_{ija},k') = k+1\} - ln(1- e^{-(c_{k}-c_{k-1})})] 
\end{align*}
}%

\section{Experiments}
For evaluation we use hotel ratings from TripAdvisor \cite{wang2011latent} and restaurant ratings from Opentable.com.
We crawled OpenTable.com for all the restaurant ratings in New York Tri-State area.
Table \ref{dataset} shows the details of the datasets. 
\begin{table}[thbp]
\scriptsize
\centering
\begin{tabular}{|c|c|c|c|l|}
\hline
Dataset    & \# Items & \# Users & \# Ratings & Aspects rated\\ \hline
TripAdvisor& 12,773      & 781,403    & 1,621,956 &  Service, Value, Room, Location \\ \hline
OpenTable       & 2805       & 1997    & 73,469 &  Ambience, Food, Service, Value\\ \hline
\end{tabular}
\vspace{-0.05in}
\caption{\small Statistics of experimental datasets.}
\label{dataset}
\end{table}
\vspace*{-0.1in}

\subsection{Rating Prediction}

One application of Ordinal Aspect Bias model is predicting observed aspect ratings.
We perform five-fold cross validation on user-item pairs, and take expected value of an aspect rating as the predicted rating.
Note that all the aspect  ratings for the same user-item pair will be in the same training or test set.
By default, the number of user groups are set to 10.
For comparison, we also implemented the following models:

\begin{itemize}[leftmargin=*]
\itemsep0em
\item \textbf{Continuous Aspect Bias  model} is the continuous variant of our model where  observed ratings are assumed to be continuous. Observed ratings are drawn from a (conjugate) multivariate Gaussian distribution, with mean as the true rating of the item offset with the bias of the user's group.
\item \textbf{Ordinal} and \textbf{Continuous No Bias model} assume users are not biased. The observed ratings for an item are drawn from only the true rating of the item. 
\item \textbf{Ordinal} and \textbf{Continuous Global Bias model} assume all users have the same bias. All ratings for an item are drawn from the true rating of the item offset with a global bias. 
\end{itemize}
\vspace{-0.05in}

\begin{table}[h]
	\scriptsize
\centering
\begin{tabular}{|l|l|l|l|l|}
\hline
\multirow{2}{*}{Model} & \multicolumn{2}{l|}{TripAdvisor Data} & \multicolumn{2}{l|}{OpenTable Data} \\  \cline{2-5}
 & log LL & RMSE & log LL & RMSE \\ \hline
Ordinal Aspect Bias & \bf{-557.08} & \bf{1.00} & \textbf{-493.79} &  \bf{1.03}\\ \hline
Continuous Aspect Bias & -1050.32 & 3.13 & -560.14 & 2.21\\ \hline
Ordinal No Bias & -689.76 & 1.47 &  -546.25 & 1.95 \\ \hline
Continuous No Bias & -1904.64 & 3.52 &  -651.16 & 2.39 \\ \hline
Ordinal Global Bias	&	-2438.52 & 2.85 & -570.28 & 2.37\\	\hline
Continuous  Global Bias	&	-2632.95 & 3.91 & -595.62 & 2.41\\	\hline
\end{tabular}
\caption{\small Test set log likelihood (the higher, the better)  and RMSE (the lower, the better). All comparisons are statistically significant (paired \textit{t-test} with $p < 0.0001$).}
\label{testLL}
\end{table} 
 
\begin{table*}[t]
\centering
\resizebox{\textwidth}{!}{%
\begin{tabular}{|c|cc|cc|cc|cc|cc|cc|cc|cc|}
\hline
\multirow{2}{*}{Model} & \multicolumn{8}{c|}{TripAdvisor Data} & \multicolumn{8}{c|}{OpenTable Data} \\ 
 & \multicolumn{2}{c}{Service} & \multicolumn{2}{c}{Value} & \multicolumn{2}{c}{Room} & \multicolumn{2}{c|}{Location} & \multicolumn{2}{c}{Ambience} & \multicolumn{2}{c}{Food} & \multicolumn{2}{c}{Service} & \multicolumn{2}{c|}{Value} \\ \hline
 & RMSE & FCP & RMSE & FCP & RMSE & FCP & RMSE & FCP & RMSE & FCP & RMSE & FCP & RMSE & FCP & RMSE & FCP \\ \cline{2-17}
PMF & 2.006 & 0.501 & 1.933 & 0.526 & 1.836 & 0.592 & 2.127 & 0.603 & 2.584 & 0.524 & 2.232 & 0.530 & 2.388 & 0.511 & 2.151 & 0.521 \\
BPMF & 1.414 & 0.586 & 1.373 & 0.571 & 1.314 & 0.614 & 1.209 & 0.651 & 1.154 & 0.490 & 0.992 & 0.532 & 1.426 & 0.498 & 1.302 & 0.519 \\
URP & 1.179 & 0.489 & 1.156 & 0.515 & 1.194 & 0.513 & 1.001 & 0.492 & 0.952 & 0.557 & 0.818 & 0.551 & 1.144 & 0.522 & 1.120 & 0.514 \\
SVD++ & \textbf{1.064} & 0.578 & 1.079 & 0.562 & 1.093 & 0.639 & 0.894 & 0.665 & \textbf{0.944*} & 0.525 & 0.831 & 0.544 & \textbf{1.088} & 0.544 & 1.131 & 0.517 \\
BHFree & 1.143 & 0.553 & 1.199 & 0.582 & 1.124 & 0.624 & 1.007 & 0.671 & 0.956 & 0.483 & 0.812 & 0.499 & 1.151 & 0.512 & 1.096 & 0.495 \\
LARA & 1.193 & 0.576 & 1.221 & 0.531 & 1.087 & 0.558 & 1.170 & 0.672 & 1.150 & 0.538 & 2.242 & 0.514 & 2.444 & 0.549 & 1.089 & 0.526 \\
OrdRec + SVD++ & 1.348 & 0.619 & 1.344 & 0.613 & 1.359 & 0.654 & 1.173 & 0.702 & 1.337 & 0.672 & 1.121 & 0.613 & 1.533 & 0.618 & 1.521 & 0.623 \\
AspectBias & 1.067 & \textbf{0.646*} & \textbf{1.063*} & \textbf{0.645*} & \textbf{1.045} & \textbf{0.678*} & \textbf{0.854*} & \textbf{0.717} & 0.953 & \textbf{0.854*} & \textbf{0.787*} & \textbf{0.850*} & 1.134 & \textbf{0.842*} & \textbf{1.043*} & \textbf{0.864*}
 \\ \hline
\end{tabular}
}%
\vspace{-0.05in}
\caption{\small Rating Prediction RMSE (the lower, the better) and FCP (the higher, the better) results. "*" denotes statistical significance with the runner up for $p < 0.005$}
\label{ratingPred}
\vspace{-0.1in}
\end{table*}

 Table \ref{testLL} shows mean log likelihood and RMSE (root mean square error) on test data. For both datasets Ordinal Aspect Bias model performs the best, demonstrating the need to consider both user bias and the proper ordinal nature of ratings.

Next, we compare the performance of our model with state-of-the-art rating prediction models, namely, PMF \cite{salakhutdinov2011probabilistic}, BPMF \cite{salakhutdinov2008bayesian}, URP \cite{marlin2003modeling,barbieri2011regularized}, SVD++ \cite{koren2008factorization} and BHFree \cite{pierre2012balancing}. For each of these, we used the best parameter settings published on LibRec.net
website. We also compare with OrdRec \cite{koren2013collaborative} which can wrap existing collaborating filtering  methods such as SVD++ \cite{koren2008factorization} to tackle ordinal rating. Since these models cannot predict multiple aspect ratings for a user-item pair, we train them separately for each aspect.  We further compare with LARA \cite{wang2011latent} which models latent aspect ratings using review texts. 
Since RMSE cannot capture personalization or ordinal rating values, we also use FCP to measure the fraction of correctly ranked pair of items for each user \cite{koren2013collaborative}. Table \ref{ratingPred} shows the results for both datasets. We see that the proposed model  outperforms state-of-the art  methods in most cases.

\begin{table}[htbp]
\scriptsize
	\centering
	\begin{tabular}{|c|c|c|}
		\hline
		Method & TripAdvisor Data & OpenTable Data \\ \hline
		PMF&0.016& 0.142\\ \hline
		BPMF&0.219&0.133\\ \hline
		URP&0.238&0.177\\ \hline
		SVD++ & 0.364 & 0.201\\ \hline
		BHFree&0.359&0.205\\ \hline
		LARA&0.289&0.152\\ \hline
		OrdRec + SVD++ &0.148&0.262\\ \hline
		OrdinalAspectBias & \textbf{0.404} & \textbf{0.298} \\ \hline
	\end{tabular}
	\vspace*{-0.05in}
	\caption{\small Pearsons Correlation of aspect ranking}
	\vspace*{-0.05in}
	\label{aspectRanking} 
\end{table}

The relative ranking of aspects for a user-item pair is also important to understand which aspects of an item the user liked better. For different methods table \ref{aspectRanking}  shows the  Pearson correlation coefficient of aspect ranking for a user-item pair, compared to its ground truth ranking. Clearly, Ordinal Aspect Bias model outperforms all other methods for the task of relative ranking of aspects.
This validates that our model is able to learn aspect rating behavior of users accurately.

\subsection{Evaluation of User Groups}

A significant advantage of our model is that it can infer latent user groups depending on their rating behaviors across multiple items. In this set of experiments, we show that
if users are assigned to the same group, then their ratings on  the same items for the same aspects are similar.

 We look at the standard deviation of the set of users belonging to the same group who have rated the same entity
 \cite{wang2011latent}. For each aspect of each item, we compare the standard deviation of the ratings of each user group with that of a control group comprising of all the users who have rated the item.

\begin{figure}[htbp]
    \centering	
	\vspace*{-0.1in}
	\subfloat[TripAdvisor]{\includegraphics[height = 1.1in,width=3.4cm]{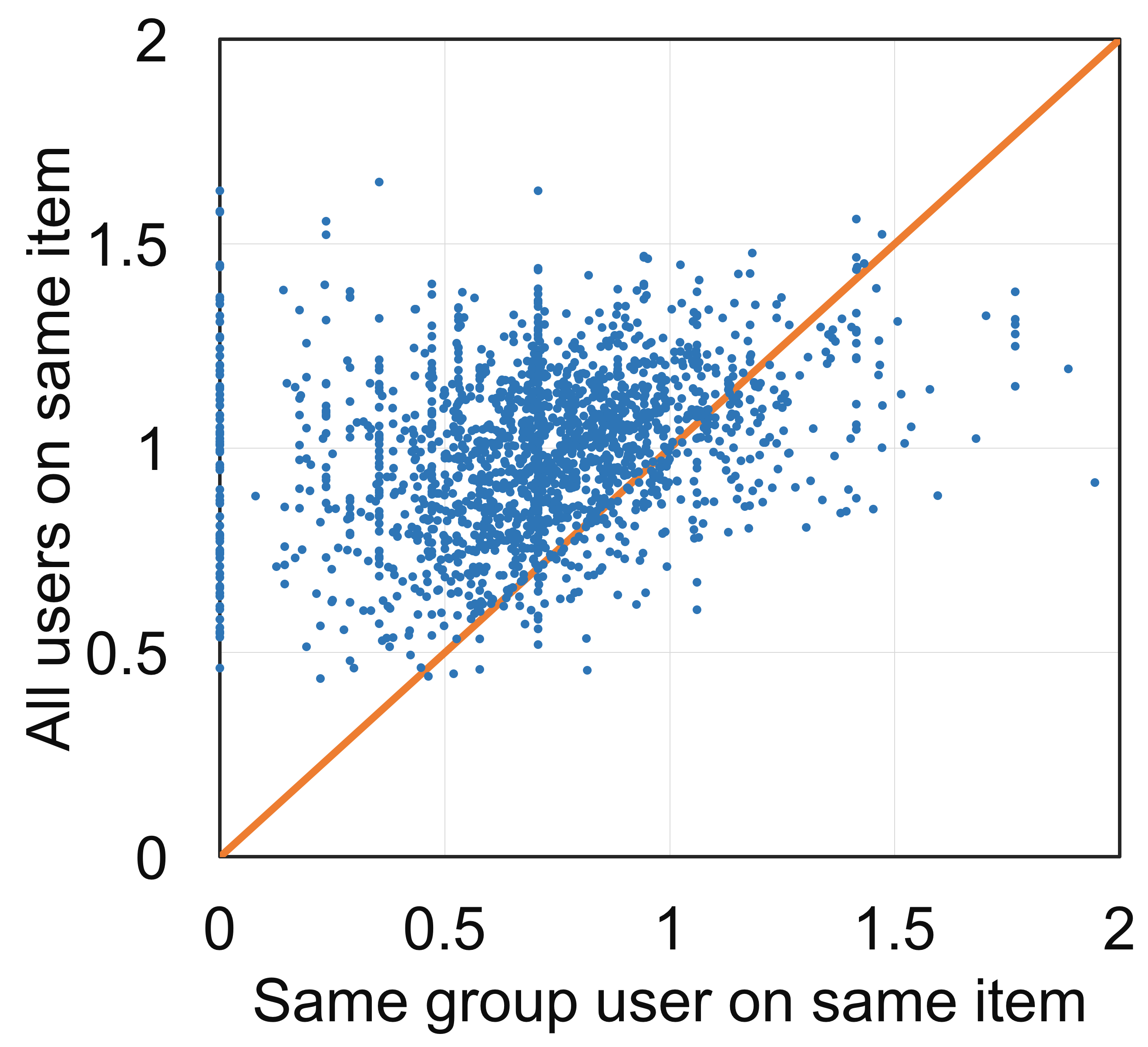}\label{SdTAall}}
    \hspace{0.5cm} \subfloat[OpenTable]{\includegraphics[height = 1.1in,width=3.4cm]{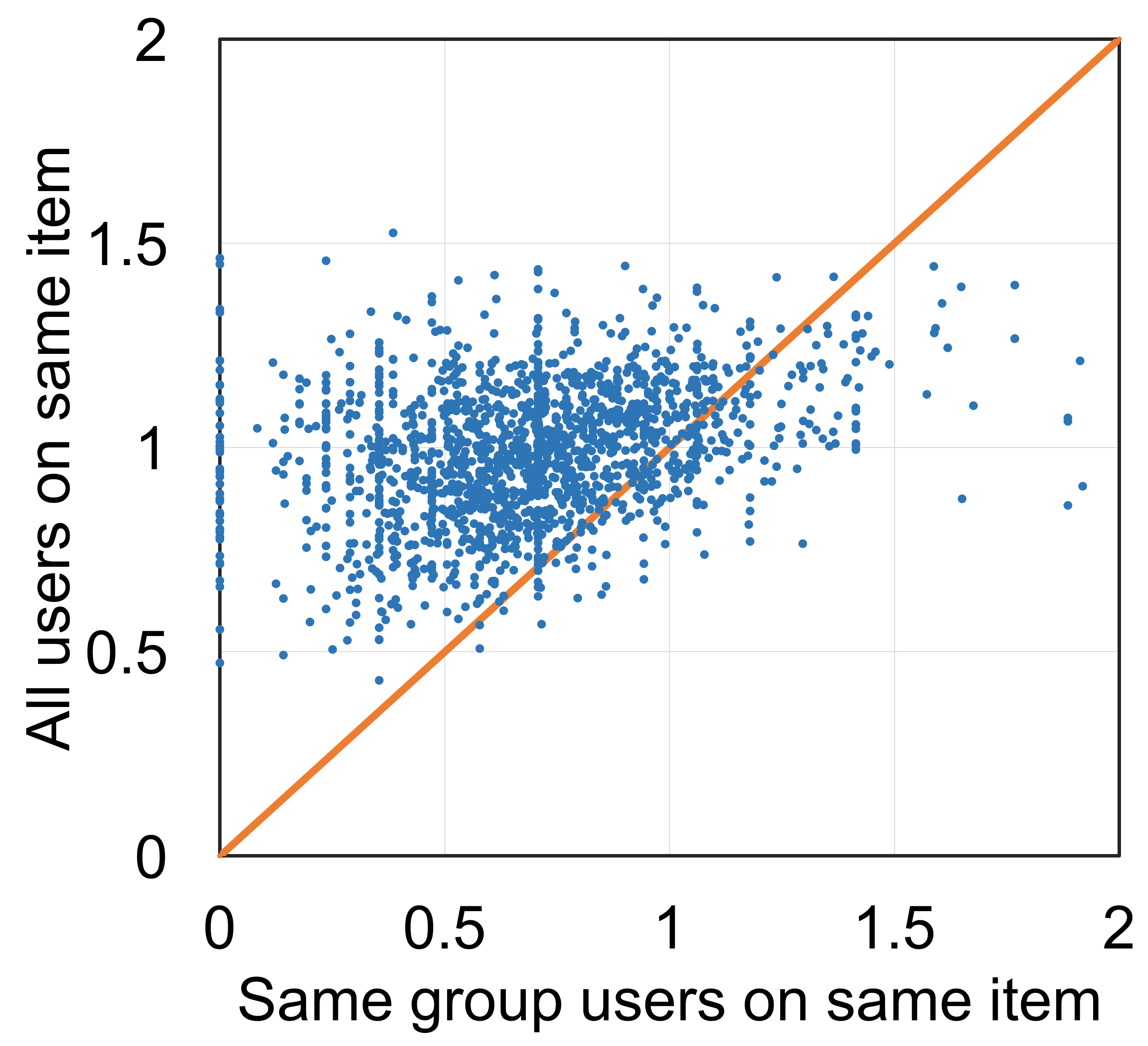}\label{SdOTall}}
	\vspace{-0.05in}
	\caption{\small Scatter plot of standard deviations of aspect ratings.}\label{clusterSD}
	\vspace*{0.1in}
\end{figure}

Figure \ref{clusterSD} shows the scatter plots of the standard deviations for both datasets. We observe that most of the points lie above the line $y=x$, indicating that users who belong to the same group have smaller standard deviation compared to the control group. This implies that the latent user groups obtained by the proposed model 
can effectively cluster users who give similar aspect ratings to the same item.

\begin{figure}[ht]
	\centering
	\subfloat[\small TripAdvisor]{ \includegraphics[height=0.22\linewidth ,width=0.75\linewidth ]{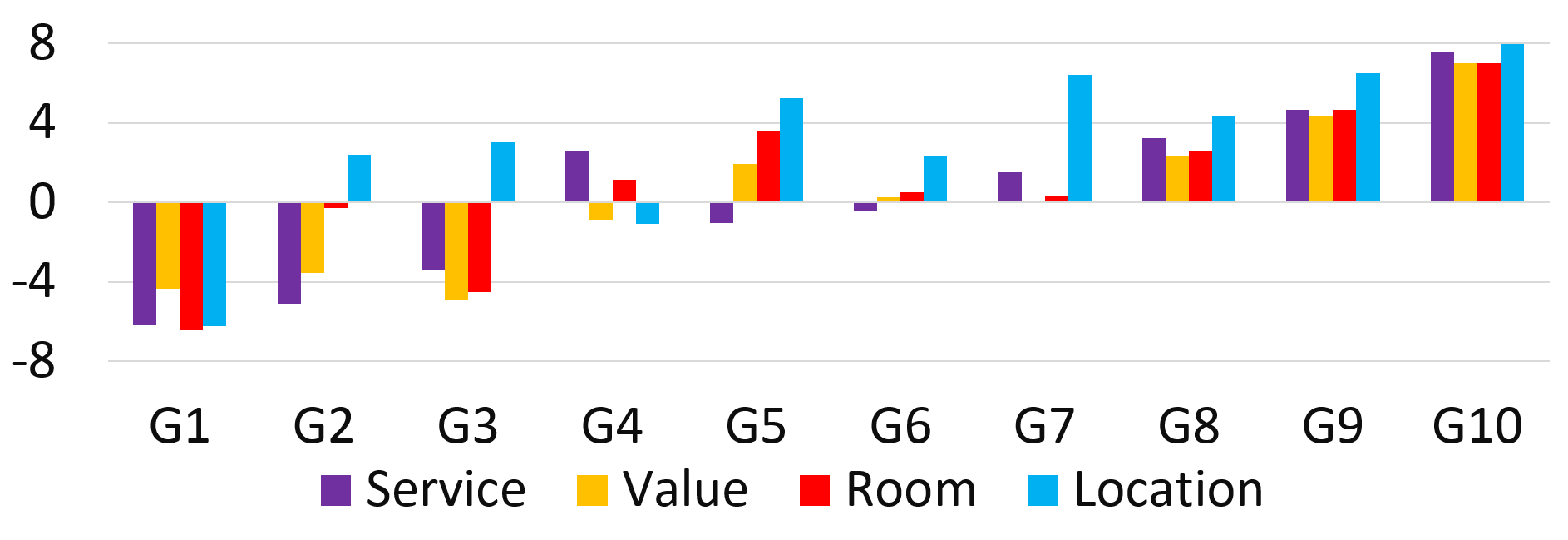}\label{fig:groupBiasTA}}\\
	\vspace{-0.15in}
	\subfloat[\small OpenTable]{  \includegraphics[height=0.22\linewidth ,width=0.75\linewidth]{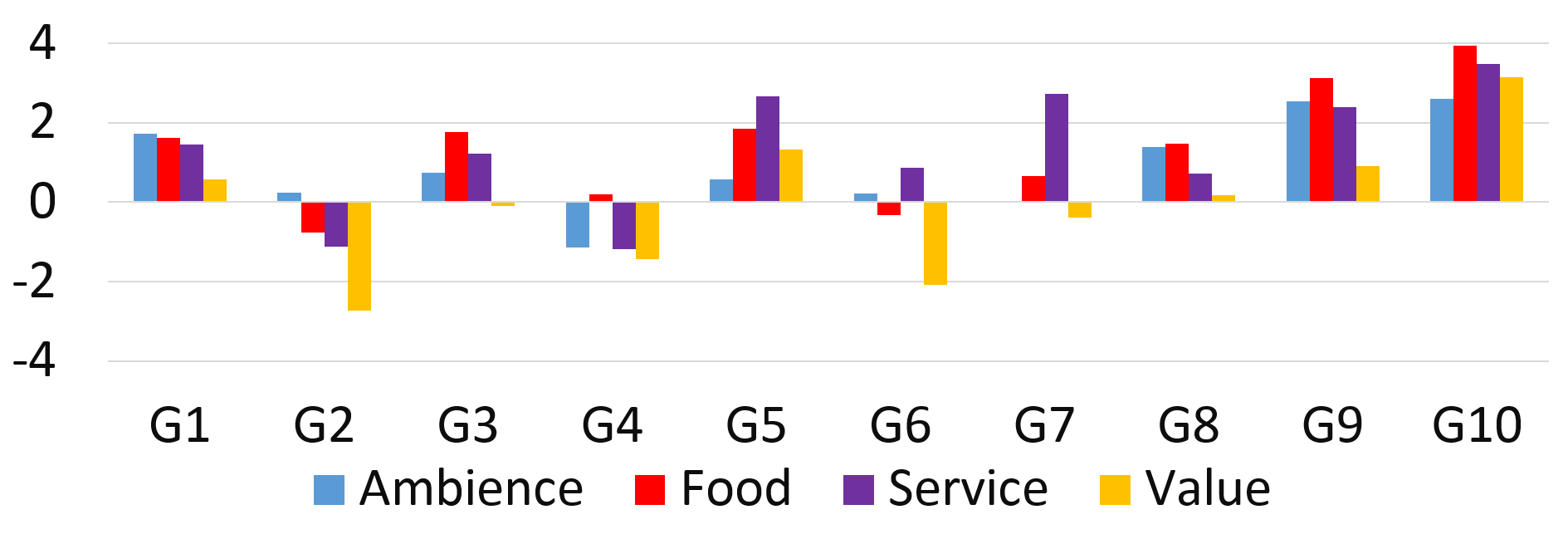}\label{fig:groupBiasOT}}
	\vspace*{-0.05in}
	\caption{\small Mean bias value of user groups.}\label{fig:groupBias}
	\vspace*{0.1in}
\end{figure}

Figure \ref{fig:groupBias} shows the mean ratings of 10 user groups after scaling the ratings to the range [-10, 10].
For the TripAdvisor dataset, we observe that the first user group seems to be quite critical whereas the last three user groups are positive. 
We also see correlation of aspect biases for different groups. 
For example, group 5 and 7 seem
to have similar biases for \textit{Room} and \textit{Value} whereas group 4 is demanding about \textit{Value} and \textit{Location}. Considering all
the ratings of the users belonging to group 5 and 7, we see that their
ratings for \textit{Value} are indeed most correlated with their ratings for
\textit{Room} than other aspects. On the other hand for group 4 their ratings
for \textit{Value} are highly correlated with their ratings for \textit{Location}. 
This suggests that for good
\textit{Value} for money,  some users prefer good \textit{Location} while some prioritize better \textit{Room} quality
and by modeling the covariance structure among aspects we are able to
uncover such dependencies.
For the OpenTable dataset, we see that users in group 4 who are particular about  \textit{Ambience} are also demanding about \textit{Service} and  \textit{Value}.

\subsection{Intrinsic Quality of Items}
Often one forms a judgment about the quality of an item by the average  rating it has received. However, if an item has
 received only a few ratings,  it is difficult to form an accurate opinion concerning its quality.
In this set of experiments, we show that the intrinsic quality, learned by the proposed model, is correlated with   
users' perception of the item's true quality, even for items with few ratings. 

We focus on items with less than 30 ratings and whose intrinsic quality and average rating for an aspect differ by at least 0.5. Since an item's true quality is unknown, we estimate it by the relative difference in the observed ratings of the same user on a pair of items. This is because if the qualities of two items are similar, a user will rate them similarly. 

For each pair of items rated by the same user on the same aspect, let their difference in observed ratings be $\Delta obs$, difference between their average ratings be $\Delta avg$ and difference between the learned intrinsic ratings be $\Delta int$. 
Figure \ref{correlIntAvg} shows the correlation between $\Delta obs$ and $\Delta int$, as well as the correlation between $\Delta obs$ and $\Delta avg$ aggregated over all aspects. We observe that for both datasets, as $\Delta int$ increases, $\Delta obs$ also increases. However, $\Delta avg$ remains almost constant. This indicates that $\Delta obs$ is closely correlated with $\Delta int$, whereas $\Delta avg$ appears to be independent of $\Delta obs$. This confirms that the learned intrinsic rating is better able to  reflect users' perception of the true quality of an item compared to using average ratings of the items. 

\begin{figure}[hthb]
	\centering
	\subfloat[OpenTable]{\includegraphics[width=0.4\linewidth]{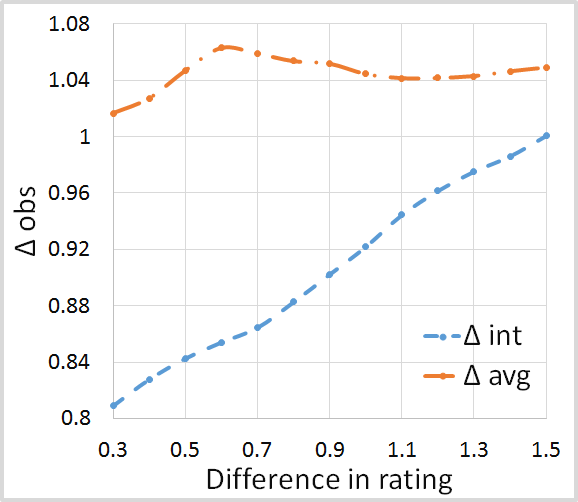}\label{TrTAInt}}
	\hspace{0.5cm} \subfloat[TripAdvisor]{\includegraphics[width=0.4\linewidth]{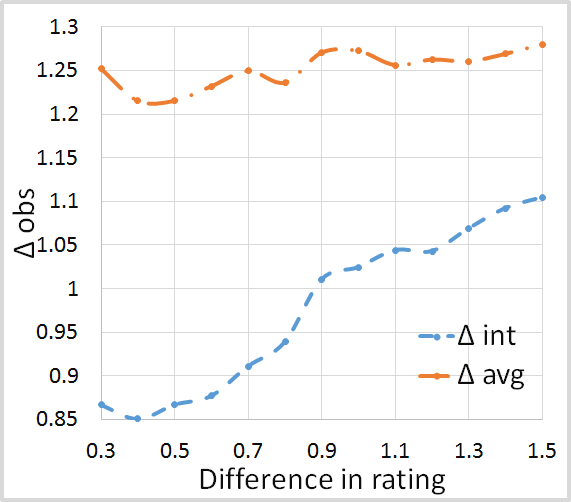}\label{TrTAExt}}
	\vspace{-0.05in}
	\caption{\small Correlation with $\Delta obs$}\label{correlIntAvg}
	\vspace{-0.15in}
\end{figure}

\subsection{Case Study}

Finally, we present the reviews of a user from OpenTable to demonstrate that the aspect bias learned by our model correlates with their review texts (see Figure \ref{criticalCaseStudy}). The user is from   group G2 in Figure~\ref{fig:groupBias}
that is particularly critical about \textit{Value}.

From the reviews of this user, as well as the reviews of randomly selected users from other groups for the same item, we see that the user from group 2 is indeed critical. We further confirm this observation by  manually going through 100 randomly sampled reviews and tabulate the sentiment distribution of each item.  We observe that the user is consistently critical even though the majority opinion is positive. 
This strengthens the fact that the group bias captured by our model is accurate and can help us better interpret a users' rating.

\begin{figure}[thbp]
	\centering
	\includegraphics[width=0.95\linewidth]{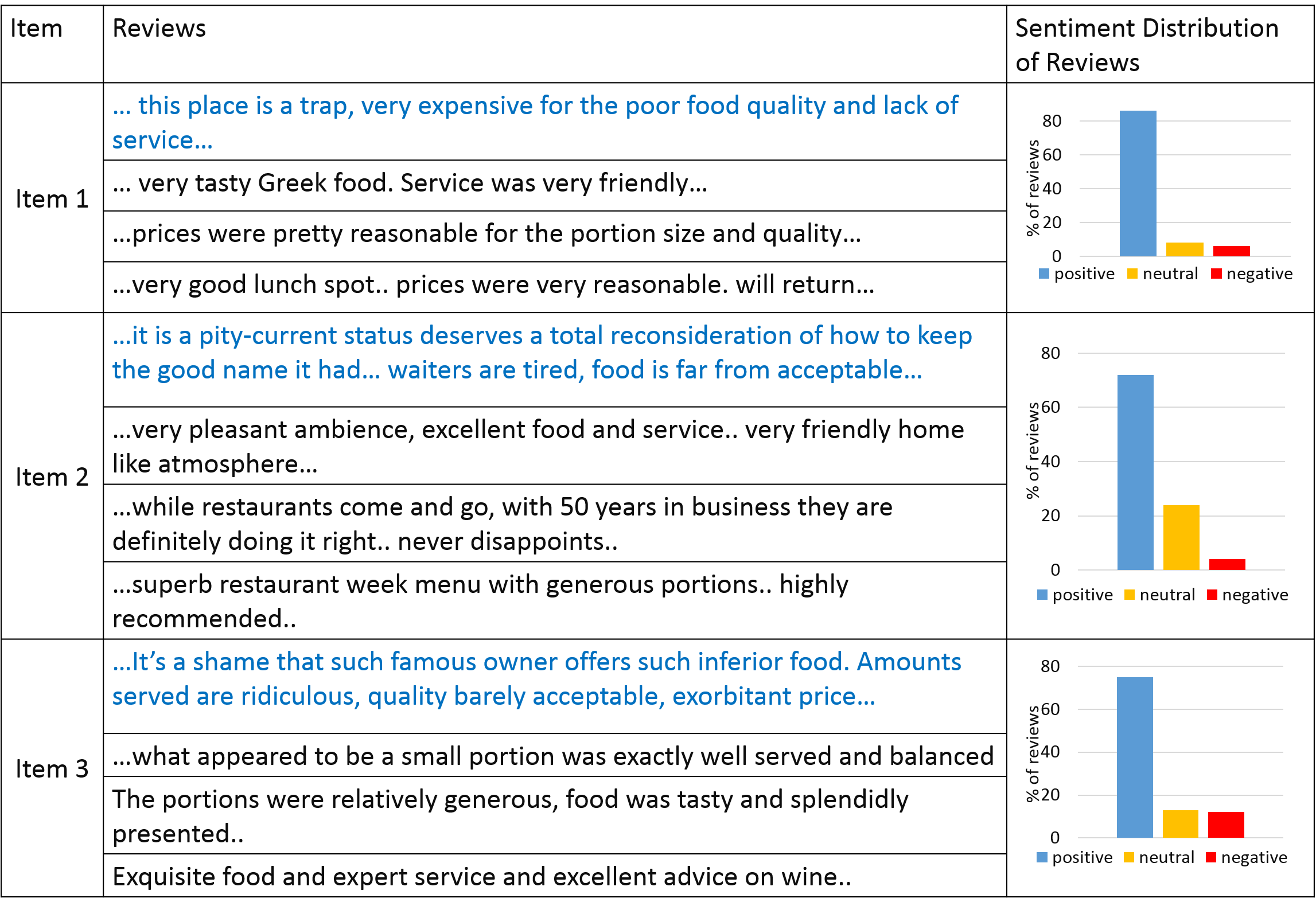}
	\vspace{-0.05in}	
	\caption{\small Reviews of user belonging to "critical"  group contrasted with other reviews on the same items}
	\label{criticalCaseStudy}
	\vspace*{0.1in}
\end{figure}

\section{Related Work}

Existing works on aspect rating prediction use reviews to analyze latent aspect ratings \cite{hao2014sentiment,wang2011latent} and ignore the \emph{explicit} aspect ratings provided by users. While the widely used CF approaches for rating prediction view ratings as \emph{continuous} values and do not encode aspect dependencies
\cite{lee2001algorithms,koren2008factorization,hofmann2003collaborative,marlin2003modeling,pierre2012balancing,salakhutdinov2008bayesian,salakhutdinov2011probabilistic,pennock2000collaborative,wang2009latent}.

There have been very few attempts to address the ordinal nature of ratings. The authors in  \cite{stern2009matchbox} develop a model combining CF and content-based filtering using regression to handle ordinal ratings as a special case. The work in \cite{koren2013collaborative}  proposes a \textit{wrapper} around a CF method for ordinal data. 
Both of these works use a logit model for ordinal regression. 

In contrast,  most statistical approaches handle ordinal data using an ordinal probit model \cite{albert1993bayesian,rossi2001overcoming,muthukumarana2014bayesian}. Although they allow a Bayesian inference but it necessitates using truncated Gaussian distributions and forced ordering of cut-off points. This leads to complicated and even sub-optimal inference. 

The authors of \cite{virtanen2015ordinal} used  stick-breaking formulation to parameterize the underlying continuous rating. However, since the non-conjugacy made an MCMC sampling non-trivial, they performed an approximate variational Bayesian inference. For correlated topic models \cite{chen2013scalable}, P\'{o}lya-Gamma auxiliary variable augmentation is used with logistic-normal transformation, whereas the work in \cite{khan2012stick} used stick-breaking likelihood for categorical data. However, none of these works use stick-breaking likelihood with a  P\'{o}lya-Gamma variable augmentation to exploit conjugacy to facilitate Gibbs sampling.

\section{Conclusion}\label{conclusion}

We have presented a novel approach to understand users' aspect bias, while capturing aspect dependencies as well as the proper ordinal nature of user responses. Our construction of the stick-breaking likelihood coupled with P\'{o}lya-Gamma auxiliary variable augmentation has resulted in an elegant Bayesian inference of the model.

 Empirical evaluation  on two real world datasets
demonstrates that through proper statistical modeling of data we are able to  capture users' rating behavior and outperform state-of-the-art approaches. Furthermore, our model is effective in user modeling, analyzing users' aspect preferences and provides a better product quality estimation even when the product has received few ratings. Most importantly, the construction of the model described here is generic and presents new possibilities for modeling such data in a wide-range of domains. Our work is orthogonal to works involving texts and social graph of rating domains and it will be interesting to know the connection between bias groups and social groups.

\newpage
\bibliographystyle{abbrv}

\bibliography{references}

\end{document}